\def\beq{\begin{equation}}    \def\eeq{\end{equation}}
\def\beqn{\begin{displaymath}}\def\eeqn{\end{displaymath}}
\def\bqa{\begin{eqnarray}}    \def\eqa{\end{eqnarray}}
\def\bqan{\begin{eqnarray*}}  \def\eqan{\end{eqnarray*}}
\def\eps{\varepsilon}
\def\epstr{\epsilon}                    
\def\fr#1#2{{\textstyle{#1\over#2}}}
\def\E{{\mathbb E}}             
\def\M{{\cal M}}
\def\w{\omega}                  
\def\po{{\pi^\circ}}            
\def\ho{h^\circ}                
\def\g{\gamma}
\def\z{z}
\def\A{{\cal A}}
\def\B{{\cal B}}
\def\O{{\cal O}}
\def\R{{\cal R}}
\def\H{{\cal H}}
\theoremstyle{plain}
\newtheorem{theorem}{Theorem}
\newtheorem{lemma}[theorem]{Lemma}
\newtheorem{definition}[theorem]{Definition}
\newtheorem{remark}[theorem]{Remark}
\newenvironment{keywords}{\centerline{\bf\small
Keywords}\begin{quote}\small}{\par\end{quote}\vskip 1ex}
\def\argmax{\mathop{\arg\max}}
\newdimen\paravsp  \paravsp=1.3ex
\def\paradot#1{\vspace{\paravsp plus 0.5\paravsp minus 0.5\paravsp}\noindent{\bf\boldmath{#1.}}}
\begin{document}

\title{\bf\Large\hrule height5pt \vskip 4mm
Optimistic Agents are Asymptotically Optimal
\vskip 4mm \hrule height2pt}

\author{{\bf Peter Sunehag} and {\bf Marcus Hutter}\\
\normalsize peter.sunehag@anu.edu.au,\ marcus.hutter@anu.edu.au\\
\normalsize Research School of Computer Science \\[-0.5ex]
\normalsize Australian National University \\[-0.5ex]
\normalsize Canberra, ACT, 0200, Australia}

\date{September 2012}

\maketitle

\begin{abstract}
We use optimism to introduce generic asymptotically optimal
reinforcement learning agents. They achieve, with an arbitrary
finite or compact class of environments, asymptotically optimal
behavior. Furthermore, in the finite deterministic case we provide
finite error bounds.
\end{abstract}

\begin{keywords}
Reinforcement Learning; Optimism; Optimality; Agents; Uncertainty.
\end{keywords}

\section{Introduction}

This article studies a fundamental question in artificial
intelligence; given a set of environments, how do we define an agent
that eventually acts optimally regardless of which of the
environments it is in. This question relates to the even more
fundamental question of what intelligence is. \cite{MH05} defines an
intelligent agent as one that can act well in a large range of
environments. He studies arbitrary classes of environments with
particular attention to universal classes of environments like all
computable (deterministic) environments and all lower
semi-computable (stochastic) environments. He defines the AIXI agent
as a Bayesian reinforcement learning agent with a universal
hypothesis class and a Solomonoff prior. This agent has some
interesting optimality properties. Besides maximizing expected
utility with respect to the a priori distribution by design, it is
also Pareto optimal and self-optimizing when this is possible for
the considered class. It was, however, shown in \cite{Orseau10} that
it is not guaranteed to be asymptotically optimal for all computable
 (deterministic) environments. \cite{Lattimore11} shows that this is not surprising
since, at least for geometric discounting, no agent can be.
\cite{Lattimore11} also shows that in a weaker (in average) sense,
optimality can be achieved for the class of all computable
environments using an algorithm that includes long exploration
phases. Furthermore, it is simple to realize that Bayesian agents
do not always achieve optimality for a finite class of
deterministic environments even if all prior weights are strictly
positive.

We use the principle of optimism to define an agent that for any
finite class of deterministic environments, eventually acts
optimally. We extend our results to the case of finite and compact
classes of stochastic environments. In the deterministic case we
also prove finite error bounds. Optimism has previously been used to
design exploration strategies for both discounted and undiscounted
MDPs \cite{KS98,strehl05,auer06,Tor12}, though here we define
optimistic algorithms for any finite class of environments.

\paradot{Related work}
Besides AIXI \cite{MH05} that was discussed above,
\cite{Lattimore11} introduces an agent which achieves asymptotic
optimality in an average sense for the class of all deterministic
computable environments. There is, however, no time step after which
it is optimal at every time step. This is due to an infinite number
of long exploration phases. We introduce an agent, that for finite
classes of environments, does eventually achieve optimality for
every time step. For the stochastic case, the agent achieves with
any given probability, optimality within $\epsilon$ for any
$\epsilon>0$. Our very simple agent is relying elegantly on the
principle of optimism, used previously in the restrictive MDP case
with discounting \cite{KS98,strehl05,Tor12} and without
\cite{auer06}, instead of an indefinite number of explicitly
enforced bursts of exploration. \cite{RH08} also introduces an agent
that relies on bursts of exploration with the aim of achieving
asymptotic optimality. The asymptotic optimality guarantees are
restricted to a setting where all environments satisfy a certain
restrictive value-preservation property. \cite{DKM05} studied
learning general Partially Observable Markov Decision Processes
(POMDPs). Though POMDPs constitute a very general reinforcement
learning setting, we are interested in agents that can be given any
(deterministic or stochastic) class of environments and successfully
utilize the knowledge that the true environment lies in this class.

\paradot{Background}
We will consider an agent \cite{RN10,MH05} that interacts with an
environment through performing actions $a_t$ from a finite set $\A$
and receives observations $o_t$ from a finite set $\O$ and rewards
$r_t$ from a finite set $\R\subset[0,1]$. Let $\H= (\A\times\O\times
R)^*$ be the set of histories and $R:\H\to\mathbb{R}$ the return
$$R(a_1o_1r_1a_2o_2r_2...a_no_nr_n)=\sum_{j=1}^n r_j\g^j$$ with the
obvious extension to infinite sequences. A function from
$\H\times\A$ to $\O\times\R$ is called a deterministic environment
(studied in Section \ref{finitedet}. A function $\pi:\H\to\A$ is
called a policy or an agent. We define the value function $V$ by
$V^\pi_\nu(h_{t-1}):=R(h_{t:\infty})=\sum_{i=t}^\infty
\gamma^{i-t}r_i$ where the sequence $r_i$ are the rewards achieved
by following $\pi$ from time step $t$ onwards in environment $\nu$
after having seen $h_{t-1}$.

Instead of viewing the environment as a function from $\H\times\A$
to $\O\times\R$ we can equivalently write it as a function
$\nu:\H\times\A\times\O\times\R\to\{0,1\}$ where we write
$\nu(o,r|h,a)$ for the function value of $(h,a,o,r)$. It equals zero
if in the first formulation $(h,a)$ is not sent to $(o,r)$ and $1$
if it is. In the case of stochastic environments, which we will
study in Section \ref{finitesto}, we instead have a function
$\nu:\H\times\A\times\O\times\R\to[0,1]$ such that
$\sum_{o,r}\nu(o,r|h,a)=1\ \forall h,a$.  Furthermore, we define
$\nu(h_t|\pi):=\nu(or_{1:t}|\pi):=\Pi_{i=1}^t
\nu(o_ir_i|a_i,h_{i-1})$ where $a_i=\pi(h_{i-1})$. $\nu(\cdot|\pi)$
is a probability measure over strings or sequences as will be
discussed in the next section and we can define
$\nu(\cdot|\pi,h_{t-1})$ by conditioning $\nu(\cdot|\pi)$ on
$h_{t-1}$. We define
$V^\pi_\nu(h_{t-1}):=\E_{\nu(\cdot|\pi,h_{t-1})} R(h_{t:\infty})$ as
the $\nu$-expected return of policy $\pi$.

A special case of an environment is a Markov Decision Process (MDP)
\cite{SB98}. This is the classical setting for reinforcement
learning. In this case the environment does not depend on the full
history but only on the latest observation and action and is,
therefore, a function from $\O\times\A\times\O\times\R$ to $[0,1]$.
In this situation one often refers to the observations as states
since the latest observation tells us everything we need to know. In
this situation, there is an optimal policy that can be represented
as a function from the state set $\mathcal{S}$ (:=$\O$) to $\A$. We
only need to base our decision on the latest observation. Several
algorithms \cite{KS98,strehl05,Tor12} have been devised for solving
discounted ($\gamma<1$) MDPs for which one can prove PAC (Probably
Approximately Correct) bounds. They are finite time bounds that hold
with high probability and depend only polynomially on the number of
states, actions and the discount factor. These methods are relying
on optimism as the method for making the agent sufficiently
explorative. Optimism roughly means that one has high expectations
for what one does not yet know. Optimism was also used to prove
regret bounds for undiscounted ($\gamma=1$) MDPs in \cite{auer06}
which was extended to feature MDPs in \cite{MR11}. Note that these
methods are restricted to MDPs and that we do not make any (Markov,
ergodicity, stationarity, etc.) assumptions on the environments,
only on the size of the class.

\paradot{Outline}
In this article we will define optimistic agents in a far more
general setting than MDPs and prove asymptotic optimality results.
The question of their mere existence is already non-trivial, hence
asymptotic results deserve attention. In Section \ref{finitedet} we
consider finite classes of deterministic environments and introduce
a simple optimistic agent that is guaranteed to eventually act
optimally. We also provide finite error bounds. In Section
\ref{finitesto} we generalize to finite classes of stochastic
environments and in Section \ref{compact} to compact classes.

\section{Finite Classes of Deterministic Environments}\label{finitedet}

Given a finite class of deterministic environments
$\M=\{\nu_1,...,\nu_m\}$, we define an algorithm that for any
unknown environment from $\M$ eventually achieves optimal behavior
in the sense that there exists $T$ such that maximum reward is
achieved from time $T$ onwards. The algorithm chooses an optimistic
hypothesis from $\M$ in the sense that it picks the environment in
which one can achieve the highest reward (in case of a tie, choose
the environment which comes first in an enumeration of $\M$) and
then the policy that is optimal for this environment is followed. If
this hypothesis is contradicted by the feedback from the
environment, a new optimistic hypothesis is picked from the
environments that are still consistent with $h$. This technique has
the important consequence that if the hypothesis is not contradicted
we are still acting optimally when optimizing for this incorrect
hypothesis.

\begin{algorithm}[!h]
\caption{Optimistic Agent ($\po$) for Deterministic
Environments}\label{algdet}
\begin{algorithmic}[1]
\REQUIRE Finite class of deterministic environments $\M_0\equiv\M$
\STATE $t=1$ \REPEAT
  \STATE $(\pi^*,\nu^*)\in\argmax_{\pi\in\Pi,\nu\in\M_{t-1}}V^\pi_\nu(h_{t-1})$
  \REPEAT
  \STATE $a_t=\pi^*(h_{t-1})$
  \STATE Perceive $o_t r_t$ from environment $\mu$
  \STATE $h_t\leftarrow h_{t-1} a_t o_t r_t$
  \STATE Remove all inconsistent environments from $\M_t$ ($\M_t:=\{\nu\in\M_{t-1}:h_t^{\po,\nu}= h_t\}$)
  \STATE $t\leftarrow t+1$
  \UNTIL $\nu^*\not\in\M_{t-1}$
\UNTIL $\M$ is empty
\end{algorithmic}
\end{algorithm}

Let $h_t^{\pi,\nu}$ be the history up to time $t$ generated by
policy $\pi$ in environment $\nu$. In particular let
$\ho:=h^{\po,\mu}$ be the history generated by Algorithm 1 (policy
$\po$) interacting with the actual ``true'' environment $\mu$. At
the end of cycle $t$ we know $\ho_t=h_t$. An environment $\nu$ is
called consistent with $h_t$ if $h_t^{\po,\nu}=h_t$. Let $\M_t$ be
the environments consistent with $h_t$. The algorithm only needs to
check whether $o_t^{\po,\nu}=o_t$ and $r_t^{\po,\nu}=r_t$ for each
$\nu\in\M_{t-1}$, since previous cycles ensure
$h_{t-1}^{\po,\nu}=h_{t-1}$ and trivially $a_t^{\po,\nu}=a_t$. The
maximization in Algorithm 1 that defines optimism at time $t$ is
performed over all $\nu\in\M_t$, the set of consistent hypotheses at
time $t$, and $\pi\in\Pi=\Pi^{all}$ is the class of {\em all}
deterministic policies.

\begin{theorem}[Optimality, Finite Deterministic Class]\label{thm:det}
If we use Algorithm 1 ($\po$) in an environment $\mu\in\M$ , then
there is $T<\infty$ such that
$$
  V^\po_\mu(h_t) \;=\; \max_\pi V^\pi_\mu(h_t)\ \forall t\geq T.
$$

\end{theorem}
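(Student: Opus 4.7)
My plan is to identify the eventually stable behaviour of Algorithm~1 and then propagate a single optimism inequality forward in time along the played trajectory, so that optimality with respect to $\nu^*$ entails optimality with respect to $\mu$.

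First I would observe that each completed outer iteration deletes at least one environment from $\M$ (the current $\nu^*$, because the inner loop's exit condition is $\nu^*\not\in\M_{t-1}$). Since $|\M|<\infty$ and $\mu$ is always consistent with the history it produces, $\mu$ is never deleted; hence line~3 is entered at most $|\M|$ times and there is a last outer iteration, entered at some step $T_0$, whose inner loop never terminates. Let $(\pi^*,\nu^*)$ be the pair selected there. Because this inner loop never exits, $\nu^*$ stays consistent forever, so the observations and rewards $\mu$ emits under $\pi^*$ equal those of $\nu^*$, and $\po$ plays $\pi^*$ at every step $t\geq T_0$. Consequently $V^\po_\mu(h_t)=V^{\pi^*}_\mu(h_t)=V^{\pi^*}_{\nu^*}(h_t)$ for all $t\geq T_0-1$.

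Next I would propagate optimism along the trajectory. The $\argmax$ at line~3 combined with $\mu\in\M_{T_0-1}$ gives the base case $V^{\pi^*}_{\nu^*}(h_{T_0-1})\geq\max_\pi V^\pi_\mu(h_{T_0-1})$. For the inductive step, write $r$ and $h_{t+1}$ for the reward and new history produced by action $\pi^*(h_t)$, which are identical under $\mu$ and $\nu^*$ by the previous paragraph. Bellman optimality for $\mu$ yields $\max_\pi V^\pi_\mu(h_t)\geq r+\g\max_\pi V^\pi_\mu(h_{t+1})$, while Bellman consistency along $\pi^*$ in $\nu^*$ yields $V^{\pi^*}_{\nu^*}(h_t)=r+\g V^{\pi^*}_{\nu^*}(h_{t+1})$; combining these with the induction hypothesis at $t$ forces the claim at $t+1$. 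Setting $T:=T_0-1$ and combining with the identity from the previous paragraph gives $V^\po_\mu(h_t)\geq\max_\pi V^\pi_\mu(h_t)$, and hence equality, for every $t\geq T$.

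I expect the main obstacle to be this inductive step: a reader might reasonably worry that the one-shot optimism selected at $T_0$ becomes stale once new observations arrive, since some other environment in $\M_t$ for $t>T_0$ could \emph{a priori} look more attractive. The resolution is precisely the Bellman induction along the actually realised trajectory, which needs only the $\mu$--$\nu^*$ observation agreement guaranteed by $\nu^*$ staying consistent and does not require re-solving the optimistic $\argmax$ at each step.
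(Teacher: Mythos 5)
Your proof is correct and follows the same overall skeleton as the paper's: finiteness of $\M$ forces the algorithm's hypothesis to stabilize at some $(\pi^*,\nu^*)$ whose inner loop never exits, consistency of $\nu^*$ gives $V^\po_\mu(h_t)=V^{\pi^*}_{\nu^*}(h_t)$, and the optimism inequality at the stabilization time is then propagated forward. The one place you genuinely diverge is the propagation step: the paper isolates this as a time-consistency lemma showing that $(\pi^*,\nu^*)$ remains a global $\argmax$ over $\Pi\times\M_{\tilde t}$ at all later times (proved by contradiction via a spliced policy), whereas you run a one-step Bellman induction that only tracks the comparison against $\mu$, i.e.\ $V^{\pi^*}_{\nu^*}(h_t)\geq\max_\pi V^\pi_\mu(h_t)$. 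Your version proves a weaker invariant that still suffices for the theorem and is arguably more self-contained; the paper's lemma is stronger (it justifies calling the conservative agent a special case of the liberal one, which the authors use later) and makes the reliance on time-consistency of geometric discounting explicit. Both arguments ultimately rest on the same recursive decomposition $V(h_t)=r+\g V(h_{t+1})$, so the difference is one of packaging rather than substance.
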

A key to proving Theorem \ref{thm:det} is time-consistency
\cite{Tor11} of geometric discounting. The following lemma tells us
that if we act optimally with respect to a chosen optimistic
hypothesis, it remains optimistic until contradicted.

\begin{lemma}[Time-consistency]\label{lemma:time-con}
Suppose
$(\pi^*,\nu^*)\in\argmax_{\pi\in\Pi,\nu\in\M_{t}}V^\pi_\nu(h_{t})$,
that we act according to $\pi^*$ from time $t$ to time $\tilde{t}-1$
and that $\nu^*$ is still consistent at time $\tilde{t}>t$ , then
$(\pi^*,\nu^*)\in\argmax_{\pi\in\Pi,\nu\in\M_{\tilde{t}}}V^\pi_\nu(h_{\tilde{t}})$.
\end{lemma}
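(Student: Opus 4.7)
The plan is to leverage the recursive structure of the geometric-discounted return to split each value into a ``prefix'' over cycles $t+1,\ldots,\tilde{t}$ plus a discounted tail starting from $h_{\tilde{t}}$. Because $\nu^*$ is still consistent at time $\tilde{t}$, the trajectory that $\pi^*$ generates in $\nu^*$ starting from $h_t$ coincides with the actually observed history on this prefix, so the prefix rewards are precisely the observed $r_{t+1},\ldots,r_{\tilde{t}}$. Concretely I would derive
\[
V^{\pi^*}_{\nu^*}(h_t) \;=\; \sum_{i=t+1}^{\tilde{t}}\gamma^{i-t-1}r_i \;+\; \gamma^{\tilde{t}-t}\,V^{\pi^*}_{\nu^*}(h_{\tilde{t}}).
\]

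Next I would compare against an arbitrary candidate $(\pi,\nu)\in\Pi\times\M_{\tilde{t}}$ by introducing a ``spliced'' policy $\tilde{\pi}\in\Pi$ that replays the $\pi^*$-actions on the segment from $h_t$ up to $h_{\tilde{t}}$ and then switches to $\pi$ from $h_{\tilde{t}}$ onward. The critical observation is that rolling $\tilde{\pi}$ out in $\nu$ from $h_t$ reproduces $h_{\tilde{t}}$ exactly: the actions match $\pi^*$'s by construction, and the resulting observations and rewards match because $\nu\in\M_{\tilde{t}}\subseteq\M_t$ is consistent with $h_{\tilde{t}}$, which was itself generated by $\po$ executing those very $\pi^*$-actions against $\mu$. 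The analogous decomposition then reads
\[
V^{\tilde{\pi}}_{\nu}(h_t) \;=\; \sum_{i=t+1}^{\tilde{t}}\gamma^{i-t-1}r_i \;+\; \gamma^{\tilde{t}-t}\,V^{\pi}_{\nu}(h_{\tilde{t}}).
\]

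Finally, since $(\pi^*,\nu^*)$ attains the maximum over $\Pi\times\M_t$ at time $t$ by hypothesis, and $(\tilde{\pi},\nu)$ is a legal competitor at time $t$ (as $\tilde{\pi}\in\Pi$ and $\nu\in\M_{\tilde{t}}\subseteq\M_t$), we have $V^{\pi^*}_{\nu^*}(h_t)\geq V^{\tilde{\pi}}_\nu(h_t)$. Subtracting the common prefix sum and dividing by the positive factor $\gamma^{\tilde{t}-t}$ yields $V^{\pi^*}_{\nu^*}(h_{\tilde{t}})\geq V^{\pi}_\nu(h_{\tilde{t}})$, which, since $(\pi,\nu)$ was arbitrary, gives the desired optimality at time $\tilde{t}$.

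The main obstacle is the splicing verification: one has to check carefully that the constructed $\tilde{\pi}$ actually drives $\nu$ through the observed history and that this is forced by $\nu$'s consistency with $h_{\tilde{t}}$ under $\po$. Everything else reduces to the Bellman-style additivity that makes geometric discounting time-consistent, and to the monotonicity $\M_{\tilde{t}}\subseteq\M_t$ that lets us transfer the maximality assumed at time $t$ to the tail value at time $\tilde{t}$.
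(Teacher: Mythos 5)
Your proposal is correct and follows essentially the same route as the paper's proof: decompose the value at $h_t$ into the common prefix reward plus a $\gamma^{\tilde{t}-t}$-discounted tail, splice $\pi^*$'s prefix actions onto an arbitrary competitor policy, and use $\M_{\tilde{t}}\subseteq\M_t$ together with optimality at time $t$ to transfer optimality to time $\tilde{t}$. The paper phrases it as a proof by contradiction while you argue directly, and you are somewhat more explicit about why consistency forces the spliced rollout to reproduce the observed prefix, but these are presentational differences only.
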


\begin{proof}
Suppose that
$V^{\pi^*}_{\nu^*}(h_{\tilde{t}})<V^{\tilde{\pi}}_{\tilde{\nu}}(h_{\tilde{t}})$
for some $\tilde{\pi}$, $\tilde{\nu}$. It holds that
$V^{\pi^*}_{\nu^*}(h_{t})=C+\gamma^{\tilde{t}-t}
V^{\pi^*}_{\nu^*}(h_{\tilde{t}})$ where $C$ is the accumulated
reward between $t$ and $\tilde{t}-1$. Let $\hat{\pi}$ be a policy
that equals $\pi^*$ from $t$ to $\tilde{t}-1$ and then equals
$\tilde{\pi}$. It follows that
$V^{\hat{\pi}}_{\tilde{\nu}}(h_{t})=C+\gamma^{\tilde{t}-t}
V^{\hat{\pi}}_{\tilde{\nu}}(h_{\tilde{t}})>C+\gamma^{\tilde{t}-t}
V^{\pi^*}_{\nu^*}(h_{\tilde{t}})=V^{\pi^*}_{\nu^*}(h_{t})$ which
contradicts the assumption
$(\pi^*,\nu^*)\in\argmax_{\pi\in\Pi,\nu\in\M_{t}}V^\pi_\nu(h_{t})$.
Therefore, $V^{\pi^*}_{\nu^*}(h_{\tilde{t}})\geq
V^{\tilde{\pi}}_{\tilde{\nu}}(h_{\tilde{t}})$ for all $\tilde{\pi}$,
$\tilde{\nu}$.
\end{proof}

\begin{proof}{\bf (Theorem \ref{thm:det})}
At time $t$ we know $h_t$. If some $\nu\in\M_{t-1}$ is
inconsistent with $h_t$, i.e.\ $h_t^{\po,\nu}\neq h_t$, it gets
removed, i.e.\ is not in $\M_{t'}$ for all $t'\geq t$.

Since $\M_0=\M$ is finite, such inconsistencies can only happen
finitely often, i.e.\ from some $T$ onwards we have $\M_t=\M_\infty$
for all $t\geq T$. Since $h_t^{\po,\mu}=h_t\ \forall t$, we know
that $\mu\in\M_t\ \forall t$.

Assume $t\geq T$ henceforth. The optimistic hypothesis will not
change after this point. If the optimistic hypothesis is the true
environment $\mu$, we have obviously chosen the true optimal policy.

In general, the optimistic hypothesis $\nu^*$ is such that it will
never be contradicted while actions are taken according to $\po$,
hence $(\pi^*,\nu^*)$ do not change anymore. This implies
$$
  V^\po_\mu(h_t)
  \;=\; V^{\pi^*}_\mu(h_t)
  \;=\; V^{\pi^*}_{\nu^*}(h_t)
  \;=\; \max_{\nu\in\M_t} \max_{\pi\in\Pi} V^\pi_\nu(h_t)
  \;\geq\; \max_{\pi\in\Pi} V^\pi_\mu(h_t)
$$
for all $t\geq T$. 
The first equality follows from $\po$ equals $\pi^*$ from $t\geq T$ onwards. %
The second equality follows from consistency of $\nu^*$ with $\ho_{1:\infty}$. %
The third equality follows from optimism, %
the constancy of $\pi^*$, $\nu^*$, and $\M_t$ for $t\geq T$, %
and time-consistency of geometric discounting (Lemma \ref{lemma:time-con}). %
The last inequality follows from $\mu\in\M_t$. %
The reverse inequality $V^{\pi^*}_\mu(h_t)\leq \max_\pi
V^\pi_\mu(h_t)$
follows from $\pi^*\in\Pi$. %
Therefore $\po$ is acting optimally at all times $t\geq T$.
\end{proof}

Besides the eventual optimality guarantee above, we also provide a
bound on the number of time steps for which the value of following
Algorithm 1 is more than a certain $\varepsilon>0$ less than
optimal. The reason this bound is true is that we only have such suboptimality for a certain number of time steps before a point where the current hypothesis becomes inconsistent and the number of such inconsistency points are bounded by the number of environments.

\begin{theorem}[Finite error bound]\label{thm:fin}
Following $\po$ (Algorithm 1), $$V^\po_\mu(h_t) \geq
\max_{\pi\in\Pi} V^\pi_\mu(h_t) - \eps,\ 0<\eps<1/(1-\g)$$ for all
but at most $|\M|{\log\eps(1-\g)\over \g-1}$ time steps $t$.
\end{theorem}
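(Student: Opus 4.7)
The plan is to partition time into \emph{phases} determined by the current optimistic pair $(\pi^*,\nu^*)$. By Lemma \ref{lemma:time-con}, once $(\pi^*,\nu^*)$ is chosen at the start of a phase, it remains optimistic (and $\po$ keeps playing $\pi^*$) until $\nu^*$ is found inconsistent, at which point $\nu^*$ is permanently removed from $\M_t$. Hence at most $|\M|$ phases ever begin, and one of them (the last) may last forever without the optimistic hypothesis being contradicted.

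Inside one phase with optimistic pair $(\pi^*,\nu^*)$, I would bound the instantaneous suboptimality as follows. Optimism together with $\mu\in\M_t$ gives $V^{\pi^*}_{\nu^*}(h_t)\ge\max_\pi V^\pi_\mu(h_t)$, so
$$
\max_\pi V^\pi_\mu(h_t)-V^{\po}_\mu(h_t)\ \le\ V^{\pi^*}_{\nu^*}(h_t)-V^{\pi^*}_\mu(h_t).
$$
Let $\tau$ be the first step at which $\nu^*$'s prediction (under $\pi^*$) disagrees with the true $\mu$; this is exactly the step at which the phase terminates. Because the trajectories produced by $\pi^*$ in $\nu^*$ and $\mu$ starting from $h_t$ coincide through step $\tau-1$, and rewards lie in $[0,1]$, the right-hand side is bounded by the discounted tail
$$
\sum_{i=\tau}^{\infty}\g^{i-t-1}\ =\ \frac{\g^{\tau-t-1}}{1-\g}.
$$

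For the suboptimality to exceed $\eps$ I would need $\g^{\tau-t-1}>\eps(1-\g)$, equivalently $\tau-t-1<\log(\eps(1-\g))/\log\g$. Applying the elementary inequality $-\log\g\ge 1-\g$ for $\g\in(0,1)$ weakens the numerical bound to $\log(\eps(1-\g))/(\g-1)$ bad steps per phase. Multiplying by the at-most-$|\M|$ phases gives the stated global bound.

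The routine pieces are the coincidence-of-trajectories computation and the $|\M|$-bound on phases (both inherited from the proof of Theorem \ref{thm:det}). The main care point I anticipate is matching the paper's exact constant: one must use the linearization $\log\g\le\g-1$ to pass from the tight count $\log(\eps(1-\g))/\log\g$ to the stated $\log(\eps(1-\g))/(\g-1)$, and handle an off-by-one in the per-phase count of bad $t$'s.
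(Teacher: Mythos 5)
Your overall strategy is the same as the paper's: at most $|\M|$ phases, only the last $\ell=\log(\eps(1-\g))/\log\g$ steps of each phase can be $\eps$-suboptimal, and the final constant comes from $-\log\g\geq 1-\g$. One intermediate step is misstated, however. You write
$\max_\pi V^\pi_\mu(h_t)-V^{\po}_\mu(h_t)\le V^{\pi^*}_{\nu^*}(h_t)-V^{\pi^*}_\mu(h_t)$,
which requires $V^{\po}_\mu(h_t)\ge V^{\pi^*}_\mu(h_t)$. That is not guaranteed: $\po$ follows $\pi^*$ only until the phase terminates at step $\tau$, and thereafter switches to the optimal policy of a \emph{new} optimistic hypothesis, which in the true environment $\mu$ could be worse than continuing with $\pi^*$ would have been. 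So $V^{\po}_\mu$ and $V^{\pi^*}_\mu$ can differ (by up to $\g^{\tau-t}/(1-\g)$), and as written your chain either fails or picks up an extra factor that changes the constant.

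The repair stays entirely within your argument: compare $V^{\pi^*}_{\nu^*}(h_t)$ directly with $V^{\po}_\mu(h_t)$ rather than detouring through $V^{\pi^*}_\mu(h_t)$. The reward stream of $\pi^*$ in $\nu^*$ from $h_t$ and the reward stream actually received by $\po$ in $\mu$ coincide through step $\tau-1$ (consistency of $\nu^*$ up to the phase's end, and $\po=\pi^*$ on those steps), so
$\max_\pi V^\pi_\mu(h_t)-V^{\po}_\mu(h_t)\le V^{\pi^*}_{\nu^*}(h_t)-V^{\po}_\mu(h_t)\le \g^{\tau-t-1}/(1-\g)$,
which is exactly the tail bound you wanted, now applied to the right pair of quantities; the phase count and the constant then go through unchanged. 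This is precisely the issue the paper's proof sidesteps by working with the $\ell$-truncated values $V^{\pi}_{\nu,\ell}$, on which $\po$, $\pi^*$, $\mu$ and $\nu_t^*$ all agree during the consistent stretch.
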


\begin{proof}
Consider the $\ell$-truncated value
$$
  V^\pi_{\nu,\ell}(h_t) \;:=\; \sum_{i=t+1}^{t+\ell}\gamma^{i-t-1}r_i
$$
where the sequence $r_i$ are the rewards achieved by following $\pi$
from time $t+1$ to $t+\ell$ in $\nu$ after seeing $h_t$. By letting
$\ell=\frac{\log{\eps(1-\gamma)}}{\log{\gamma}}$ (which is positive
due to negativity of both numerator and denominator)  we achieve $
|V^\pi_{\nu,\ell}(h_t)-
V^\pi_{\nu}(h_t)|\leq\frac{\gamma^l}{1-\gamma}=\epsilon$. Let
$(\pi_t^*,\nu_t^*)$ be the policy-environment pair selected by
Algorithm 2 in cycle $t$.

Let us first assume $h_{t+1:t+\ell}^{\po,\mu} =
h_{t+1:t+\ell}^{\po,\nu_t^*}$, i.e.\ $\nu_t^*$ is consistent with
$\ho_{t+1:t+\ell}$, and hence $\pi_t^*$ and $\nu_t^*$ do not change
from $t+1,...,t+\ell$ (inner loop of Algorithm 1). Then
$$
  V^\po_\mu(h_t)
  \stackrel{\makebox[10ex]{\footnotesize drop terms,~~~~~~}\atop\scriptstyle\downarrow}{\geq}
  V^\po_{\mu,\ell}(h_t)
  \stackrel{\makebox[10ex]{\footnotesize same $h_{t+1:t+\ell}$,~~~~~~~}\atop\textstyle\downarrow}{=}
  V^\po_{\nu_t^*,\ell}(h_t)
  \stackrel{\makebox[10ex]{\footnotesize $\po\!=\!\pi_t^*$ on $h_{t+1:t+\ell}$,~}\atop\textstyle\downarrow}{=}
  V^{\pi_t^*}_{\nu_t^*,\ell}(h_t)
$$ \vspace{-3ex}
$$
  \mathop{\geq}\limits_{\scriptstyle\uparrow\atop\makebox[6ex]{\footnotesize ~~~~~~~~~~bound extra terms}}
  V^{\pi_t^*}_{\nu_t^*}(h_t) - \fr{\g^\ell}{1-\g}
  \mathop{=}\limits_{\textstyle\uparrow\atop\makebox[7ex]{\footnotesize ~~~~~~~~~~~~~~~~~~~~~~~def.\ of $(\pi_t^*,\nu_t^*)$ and $\smash{\eps:={\g^\ell\over 1-\g}}$}}
  \max_{\nu\in\M_t}\max_{\pi\in\Pi} V^\pi_\nu(h_t) - \eps
  \mathop{\geq}\limits_{\scriptstyle\uparrow\atop\makebox[7ex]{\footnotesize $\mu\in\M_t$}}
  \max_{\pi\in\Pi} V^\pi_\mu(h_t) - \eps.
$$

Now let $t_1,...,t_K$ be the times $t$ at which the currently
selected $\nu_t^*$ gets inconsistent with $h_t$, i.e.\
$\{t_1,...,t_K\}=\{t:\nu_t^*\not\in\M_t\}$.
Therefore $\ho_{t+1:t+\ell}\neq h_{t+1:t+\ell}^{\po,\nu_t^*}$ (only)
at times $t\in{{\cal
T}_{\!\times}}:=\:\bigcup_{i=1}^K\{t_i-\ell,...,t_i-1\}$, which
implies $V^\po_\mu(h_t) \geq \max_{\pi\in\Pi} V^\pi_\mu(h_t) - \eps$
except possibly for $t\in{\cal T}_{\!\times}$. Finally
$$
  |{\cal T}_{\!\times}|
  \;=\; \ell\!\cdot\!K
  \;<\; \ell\!\cdot\!|\M|
  \;=\; {\log\eps(1-\g)\over\log\g}|\M|\;\leq\; |\M|{\log\eps(1-\g)\over
  \g-1}
$$

\end{proof}

We refer to the algorithm above as the conservative agent since it
sticks to its model for as long as it can. The corresponding liberal
agent reevaluates its optimistic hypothesis at every time step and
can switch between different optimistic policies at any time.
Algorithm 1 is actually a special case of this as shown by Lemma
\ref{lemma:time-con}. The liberal agent is really a class of
algorithms and this larger class of algorithms consists of exactly
the algorithms that are optimistic at every time step without
further restrictions. The conservative agent is the subclass of
algorithms that only switch hypothesis when the previous is
contradicted. The results for the conservative agent can be extended
to the liberal one, but we have to omit that here for space reasons.

\section{Stochastic Environments}\label{finitesto}

A stochastic hypothesis may never become completely inconsistent in
the sense of assigning zero probability to the observed sequence
while still assigning very different probabilities than the true
environment. Therefore, we exclude based on a threshold for the
probability assigned to the generated history. Unlike in the
deterministic case, a hypothesis can cease to be the optimistic one
without having been excluded. We, therefore, only consider an
algorithm that reevaluates its optimistic hypothesis at every time
step. Algorithm 2 specifies the procedure and Theorem \ref{thm:prob}
states that it is asymptotically optimal.

\begin{algorithm}[!h]
\caption{Optimistic Agent ($\po$) with Stochastic Finite
Class}\label{algprob}
\begin{algorithmic}[1]\REQUIRE Finite class of stochastic
environments $\M_1\equiv\M$, threshold $\z\in (0,1)$
 \STATE $t=1$
 \REPEAT
  \STATE $(\pi^*,\nu^*)=\argmax_{\pi,\nu\in\M_t}V^\pi_\nu(h_{t-1})$
  \STATE $a_t=\pi^*(h_{t-1})$
  \STATE Perceive $o_t r_t$ from environment $\mu$
  \STATE $h_{t}\leftarrow h_{t-1} a_t o_t r_t$
  \STATE $t\leftarrow t+1$
  \STATE $\M_t:=\{\nu\in\M_{t-1}:\frac{\nu(h_t|a_{1:t})}{\max_{\tilde{\nu}\in
\M} \tilde{\nu}(h_t|a_{1:t})}\geq\z\}$ \UNTIL the end of time

\end{algorithmic}
\end{algorithm}

\begin{theorem}[Optimality, Finite Stochastic Class]\label{thm:prob}
Define $\po$ by using Algorithm 2 with any threshold $\z\in(0,1)$
and a finite class $\M$ of stochastic environments containing the
true environment $\mu$, then with probability $1-\z|\M-1|$ there
exists, for every $\eps>0$, a number $T<\infty$ such that
$$
  V^\po_\mu(h_t) \;>\; \max_\pi V^\pi_\mu(h_t)-\eps\ \forall t\geq
  T.
$$
\end{theorem}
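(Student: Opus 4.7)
The plan is to split the argument into a probabilistic stage that controls the exclusion of $\mu$ and an almost-sure stage in which the optimistic value asymptotically matches the true value. For the first stage, I observe that for each $\nu\in\M\setminus\{\mu\}$ the likelihood ratio $L_t^\nu:=\nu(h_t|a_{1:t})/\mu(h_t|a_{1:t})$ is a non-negative $\mu$-martingale with $\E_\mu L_t^\nu=1$: since $a_t=\po(h_{t-1})$ is a deterministic function of $h_{t-1}$, conditioning on $h_{t-1}$ and summing over $(o_t,r_t)$ gives $\E_\mu[L_t^\nu\mid h_{t-1}]=L_{t-1}^\nu$. Ville's maximal inequality yields $P_\mu(\sup_t L_t^\nu\geq 1/\z)\leq\z$, and since $\mu\notin\M_t$ happens exactly when some $L_t^\nu$ with $\nu\ne\mu$ exceeds $1/\z$, a union bound gives $P_\mu(\exists\,t:\mu\notin\M_t)\leq\z(|\M|-1)$. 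Call the complementary event $A$; it suffices to prove asymptotic $\eps$-optimality of $\po$ on $A$.

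Fix $\eps>0$ and, as in Theorem~\ref{thm:fin}, pick a horizon $\ell$ with $\g^\ell/(1-\g)\leq\eps/3$. On $A$, $\mu\in\M_t$ for every $t$, so optimism at step $t$ delivers $V^{\pi^*}_{\nu^*}(h_{t-1})\geq\max_\pi V^\pi_\mu(h_{t-1})$, and it is enough to show $V^{\pi^*}_\mu(h_{t-1})\geq V^{\pi^*}_{\nu^*}(h_{t-1})-\eps/3$ for all large $t$, after which a standard ``greedy near-optimal action implies near-optimal return'' Bellman unrolling propagates the bound from $\pi^*$ to $\po$ at the cost of a factor $1/(1-\g)$ absorbed into a rescaled $\eps$. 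Because $\M$ is finite and excluded environments stay excluded, any $\nu$ chosen as $\nu^*$ at infinitely many steps lies in $\bigcap_t\M_t$, whence $L_t^\nu\in[\z,1/\z]$ for every $t$; as a bounded non-negative $\mu$-martingale, $L_t^\nu$ converges $\mu$-a.s., which is equivalent to $\sum_t\mathrm{KL}(\mu(\cdot|h_{t-1},a_t)\,\|\,\nu(\cdot|h_{t-1},a_t))<\infty$ almost surely, so the single-step KL along the realised trajectory tends to zero.

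Using Blackwell--Dubins merging of opinions (mutual absolute continuity of the $\mu$- and $\nu$-laws on the history $\sigma$-algebra is supplied by the bounded likelihood ratio), I would upgrade this to merging of the $\ell$-step predictive laws of $\mu$ and $\nu$ under any history-dependent policy, in total variation and $\mu$-a.s. Pinsker's inequality together with the bound $r_t\in[0,1]$ then yields $|V^{\pi^*}_{\nu^*,\ell}(h_{t-1})-V^{\pi^*}_{\mu,\ell}(h_{t-1})|\leq\eps/3$ uniformly over the finitely many candidate $\nu^*$ for $t$ large enough, and combining tail truncation, optimism, merging, and Bellman propagation closes the argument. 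The main obstacle is precisely this merging step: the martingale argument yields summable single-step KLs only along the \emph{realised} action--observation sequence, whereas the $\ell$-step value gap averages over $\pi^*$-reachable paths that need not coincide with the realised path, so one must appeal carefully to Blackwell--Dubins, or unroll the KL chain rule over the $\ell$-step lookahead by hand, to guarantee merging of the \emph{joint} $\ell$-step laws.
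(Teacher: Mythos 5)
Your first stage coincides exactly with the paper's: the likelihood ratio $L_t^\nu=\nu(h_t|a_{1:t})/\mu(h_t|a_{1:t})$ is a nonnegative $\mu$-martingale, Doob's maximal inequality gives $P(\sup_t L_t^\nu\geq 1/\z)\leq\z$, and a union bound over the $|\M|-1$ false environments bounds the probability of ever ejecting $\mu$ by $\z(|\M|-1)$. For the second stage the paper takes a different route from yours: it invokes the generalized Blackwell--Dubins result (Lemma~\ref{lemma:m}) to conclude that every surviving $\nu$ either has $L_t^\nu\to 0$ or merges with $\mu$ in total variation over the \emph{entire future $\sigma$-algebra} of the $\po$-process, so that by Lemma~\ref{lemma:V} the values $V^\po_{\nu}(h_t)$ of the finitely many survivors become uniformly $\tilde\eps$-close, and it then closes with Lemma~\ref{optpol} (``optimism is nearly optimal''), which is stated without proof. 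Your truncation-plus-Bellman-unrolling reduction to one-step near-optimality is a legitimate and more explicit substitute for that last step, and the factor $1/(1-\g)$ bookkeeping is correct.

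However, the obstacle you flag at the end is a genuine gap, not a technicality, and it is not removed by appealing to Blackwell--Dubins. Merging under $\po$ controls $d(\mu(\cdot|h_t,\po),\nu(\cdot|h_t,\po))$ and hence values of the \emph{realized} policy, whereas your optimism step needs $|V^{\pi^*}_{\nu^*}(h_{t-1})-V^{\pi^*}_\mu(h_{t-1})|\leq\eps/3$ for the counterfactual policy $\pi^*=\pi^*_t$, which $\po$ abandons after one step. A surviving $\nu^*$ can agree with $\mu$ on every history reachable under $\po$ (so $L_t^{\nu^*}\equiv 1$ and all realized one-step KLs vanish) yet differ arbitrarily on the off-path branches over which the $\ell$-step lookahead under $\pi^*$ averages; in that situation neither your summable-KL/Pinsker argument nor Lemma~\ref{lemma:m} yields the required bound. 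Closing this requires an argument of the kind buried in the paper's unproved Lemma~\ref{optpol} --- essentially that an optimistic hypothesis whose $\pi^*$-value persistently exceeds its $\po$-value keeps being acted upon and is therefore tested and eventually falsified or merged \emph{on-path}, which cannot happen indefinitely for a finite class. As written, your proposal correctly establishes the probabilistic envelope and a sound reduction, but the central merging-under-counterfactual-policies step remains open.
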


We borrow some techniques from \cite{Hutter09} that introduced a
``merging of opinions" result that generalized the classical theorem
by \cite{blackwell}. The classical result says that it is sufficient
that the true measure (over infinite sequences) is absolutely
continuous with respect to a chosen a priori distribution to
guarantee that they will almost surely merge in the sense of total
variation distance. The generalized version is given in Lemma
\ref{lemma:m}. When we combine a policy $\pi$ with an environment
$\nu$ by letting the actions be taken by the policy, we have defined
a measure, denoted by $\nu(\cdot|\pi)$, on the space of infinite
sequences from a finite alphabet.  We denote such a sample sequence
by $\w$ and the $a$:th to $b$:th elements of $\w$ by $\w_{a:b}$. The
$\sigma$-algebra is generated by the cylinder sets
$\Gamma_{y_{1:t}}:=\{\w|\w_{1:t}=y_{1:t}\}$ and a measure is
determined by its values on those sets. To simplify notation in the
next lemmas we will write $P(\cdot)=\nu(\cdot|\pi)$, meaning that
$P(\w_{1:t})=\nu(h_t| a_{1:t})$ where $\w_j=o_jr_j$ and
$a_j=\pi(h_{j-1})$. Furthermore, $\nu(\cdot|h_t,\pi)=P(\cdot|h_t)$.

\begin{definition}[Total Variation Distance]
The total variation distance between two measures (on infinite
sequences $\w$ of elements from a finite alphabet) $P$ and $Q$ is
defined to be
$$
  d(P,Q) \;=\; \sup_A |P(A)-Q(A)|
$$
where $A$ is in the previously specified $\sigma$-algebra generated
by the cylinder sets.
\end{definition}

The results from \cite{Hutter09} are based on the fact that
$Z_t=\frac{Q(\w_{1:t})}{P(\w_{1:t})}$ is a martingale sequence if
$P$ is the true measure and therefore converges with $P$ probability
$1$ \cite{doob}. The crucial question is if the limit is strictly
positive or not. The following lemma shows that with $P$ probability
$1$ we are either in the case where the limit is $0$ or in the case
where $d(P(\cdot|\w_{1:t}),Q(\cdot|\w_{1:t}))\to 0$. We say that the
environments $\nu_1$ and $\nu_2$ {\emph merge} under $\pi$ if
$d(\nu_1(\cdot|\pi),\nu_2(\cdot|\pi))\to 0$.

\begin{lemma}[Generalized merging of opinions \cite{Hutter09}]\label{lemma:m}
For any measures $P$ and $Q$ it holds that
$P(\Omega^\circ\cup\bar\Omega)=1$ where
$$
  \Omega^\circ:= \{\w:\frac{Q(\w_{1:t})}{P(\w_{1:t})}\to 0\} ~~~\text{and}~~~
  \bar\Omega:=\{\w:d(P(\cdot|\w_{1:t}),Q(\cdot|\w_{1:t}))\to 0\}
$$
\end{lemma}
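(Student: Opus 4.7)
The plan is to apply Doob's martingale convergence theorem to the likelihood-ratio process $Z_t := Q(\omega_{1:t})/P(\omega_{1:t})$ and then analyze the two limiting regimes it produces. First I would verify that $(Z_t)$ is a non-negative $P$-martingale with respect to the cylinder filtration $\mathcal{F}_t = \sigma(\omega_{1:t})$, adopting the convention $Z_t := 0$ on the $P$-null event $\{P(\omega_{1:t})=0\}$. The martingale identity is a one-line telescoping computation: $\mathbb{E}_P[Z_{t+1}|\mathcal{F}_t] = \sum_y P(y|\omega_{1:t})\cdot Q(\omega_{1:t}y)/P(\omega_{1:t}y) = Z_t$. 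Doob's theorem for non-negative martingales then produces a $P$-almost sure finite limit $Z_\infty \in [0,\infty)$.

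Up to $P$-null sets the sample space splits into $\{Z_\infty = 0\}$ and $\{Z_\infty > 0\}$. The inclusion $\{Z_\infty = 0\} \subseteq \Omega^\circ$ is immediate from the definition of $\Omega^\circ$, so the argument reduces to showing that $P$-almost every $\omega$ with $Z_\infty(\omega) > 0$ also lies in $\bar\Omega$, i.e.\ that the conditional measures $P(\cdot|\omega_{1:t})$ and $Q(\cdot|\omega_{1:t})$ merge in total variation along such sequences.

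For that step I would fix $\omega \in \{Z_\infty > 0\}$ and, for each $t$, consider the process $W_s^{(t)}(y) := Z_s(\omega_{1:t}y_{t+1:s})/Z_t(\omega)$ indexed by $s \geq t$. A direct calculation identifies $W_s^{(t)}$ with the likelihood ratio $Q(\omega_{1:t}y_{t+1:s}|\omega_{1:t})/P(\omega_{1:t}y_{t+1:s}|\omega_{1:t})$, so it is a non-negative $P(\cdot|\omega_{1:t})$-martingale in $s$ that converges, conditionally almost surely, to $Z_\infty/Z_t(\omega) \in (0,\infty)$. Using the $L^1$ characterization of total variation, together with a Lebesgue-decomposition bookkeeping for any singular remainder, one then obtains
\[
   d\bigl(P(\cdot|\omega_{1:t}),Q(\cdot|\omega_{1:t})\bigr) \;\leq\; \tfrac{1}{2}\,\mathbb{E}_{P(\cdot|\omega_{1:t})}\bigl|Z_\infty/Z_t(\omega) - 1\bigr| \;+\; r_t(\omega),
\]
where $r_t(\omega)$ denotes the mass of the singular part of $Q(\cdot|\omega_{1:t})$ with respect to $P(\cdot|\omega_{1:t})$.

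The main obstacle is turning the pointwise relation $Z_\infty(\omega)/Z_t(\omega)\to 1$ on $\{Z_\infty > 0\}$ into simultaneous vanishing of both the $L^1$ term and the singular remainder $r_t(\omega)$ as $t\to\infty$. My plan is to apply L\'evy's upward theorem to the closed martingale $\mathbb{E}_P[Z_\infty|\mathcal{F}_t]$, which converges to $Z_\infty$ in $L^1(P)$, and then read off that $\mathbb{E}_{P(\cdot|\omega_{1:t})}[Z_\infty] = \mathbb{E}_P[Z_\infty|\mathcal{F}_t](\omega) \to Z_\infty(\omega)$ for $P$-almost every $\omega$; on $\{Z_\infty>0\}$ this forces $\mathbb{E}_{P(\cdot|\omega_{1:t})}[Z_\infty/Z_t(\omega)]\to 1$, which pins down both the $L^1$ difference and the singular mass $r_t(\omega) = 1 - \mathbb{E}_{P(\cdot|\omega_{1:t})}[Z_\infty/Z_t(\omega)]$. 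The delicate point is precisely that $Q$ is not assumed absolutely continuous with respect to $P$---so the classical Blackwell--Dubins theorem does not apply directly and $(Z_t)$ need not be uniformly integrable under $P$---and the argument must instead exploit uniform integrability along the conditioned paths $P(\cdot|\omega_{1:t})$ for $\omega$ in $\{Z_\infty>0\}$.
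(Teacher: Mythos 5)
The paper does not actually prove this lemma---it imports it from \cite{Hutter09} and only sketches the idea in the surrounding text (the likelihood ratio $Z_t=Q(\w_{1:t})/P(\w_{1:t})$ is a nonnegative $P$-martingale, hence converges $P$-a.s., and the dichotomy is governed by whether the limit is $0$ or positive). Your skeleton is exactly that argument, and your handling of the singular part (the remainder $r_t$) via L\'evy's upward theorem is correct: $r_t = 1-\E_P[Z_\infty\mid\mathcal F_t]/Z_t \to 1 - Z_\infty/Z_\infty = 0$ on $\{Z_\infty>0\}$.

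There is, however, a genuine gap at the step where you claim that $\E_{P(\cdot|\w_{1:t})}[Z_\infty/Z_t]\to 1$ ``pins down the $L^1$ difference.'' Convergence of the conditional \emph{mean} to $1$ does not by itself control $\E_{P(\cdot|\w_{1:t})}\bigl|Z_\infty/Z_t-1\bigr| = Z_t^{-1}\,\E_P[\,|Z_\infty-Z_t|\mid\mathcal F_t]$: a random variable with mean close to $1$ can have large $L^1$-distance from $1$, and here both the integrand and the conditioning measure change with $t$, so no unconditional Scheff\'e argument applies directly. The standard repair is the identity $|Z_\infty-Z_t| = 2(Z_\infty-Z_t)^+ - (Z_\infty-Z_t)$, which gives
$$
\E_P\bigl[\,|Z_\infty-Z_t|\,\big|\,\mathcal F_t\bigr]
= 2\,\E_P\bigl[(Z_\infty-Z_t)^+\,\big|\,\mathcal F_t\bigr] + Z_t - \E_P[Z_\infty\mid\mathcal F_t].
$$
The last two terms cancel in the limit by L\'evy's theorem and $Z_t\to Z_\infty$; for the first term you need a \emph{conditional} dominated convergence result (Hunt's lemma: $X_t\to X$ a.s.\ with $\sup_t|X_t|$ integrable implies $\E[X_t\mid\mathcal F_t]\to X$ a.s.), applied to $X_t=(Z_\infty-Z_t)^+$, which is dominated by the integrable $Z_\infty$ and tends to $0$ a.s. Note that the domination by $Z_\infty$ is exactly what saves you, since $\sup_t Z_t$ itself need not be integrable, as you observe. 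With Hunt's lemma inserted, and dividing by $Z_t\to Z_\infty>0$, your argument closes; without it, the key inference is unjustified.
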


\begin{lemma}[Value convergence for merging environments]\label{lemma:V}
Given a policy $\pi$ and environments $\mu$ and $\nu$ it follows
that
$$
  |V^\pi_\mu(h_t)-V^\pi_\nu(h_t)| \;\leq\;
  \frac{1}{1-\g} d(\mu(\cdot | h_t,\pi),\nu(\cdot | h_t,\pi)).
$$
\end{lemma}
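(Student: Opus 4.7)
The plan is to express both values as integrals of the same discounted-return functional against the two different conditional measures and then invoke a standard inequality bounding the expectation gap by the total variation distance times the range of the integrand.

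First I would recall that $V^\pi_\nu(h_t) = \E_{\nu(\cdot\,|\,h_t,\pi)} R(h_{t+1:\infty})$, and likewise for $\mu$. Viewing the tail $\w = o_{t+1} r_{t+1} o_{t+2} r_{t+2}\cdots$ as a point in the cylinder $\sigma$-algebra on infinite sequences, the map $R(\w) = \sum_{i=1}^\infty \g^{i-1} r_{t+i}$ is a measurable functional whose range is contained in the interval $[0,\,1/(1-\g)]$, because every per-step reward lies in $[0,1]$. The two conditional measures $P := \mu(\cdot\,|\,h_t,\pi)$ and $Q := \nu(\cdot\,|\,h_t,\pi)$ are probability measures on the same $\sigma$-algebra, so the difference of values equals $\int R\,dP - \int R\,dQ$.

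Next I would apply the general fact that for any bounded measurable $f$ with $f(\Omega)\subseteq[a,b]$ and any two probability measures $P,Q$ on a common $\sigma$-algebra, $|\E_P f - \E_Q f| \le (b-a)\,d(P,Q)$. The standard derivation subtracts the constant $c = (a+b)/2$ from $f$, noting $\int (f-c)\,d(P-Q) = \int f\,d(P-Q)$ since the signed measure $P-Q$ has total mass zero. One then bounds $\bigl|\int (f-c)\,d(P-Q)\bigr| \le \|f-c\|_\infty\cdot |P-Q|(\Omega)$; the first factor is at most $(b-a)/2$, and the second factor equals $2\,d(P,Q)$ because for a signed measure of zero net mass the total variation norm is exactly twice $\sup_A |(P-Q)(A)|$.

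Plugging in $b-a = 1/(1-\g)$ gives $|V^\pi_\mu(h_t) - V^\pi_\nu(h_t)| \le \tfrac{1}{1-\g} d(P,Q)$, which is the claim. There is essentially no obstacle here: the only things to check carefully are that $R$ is indeed measurable on the cylinder $\sigma$-algebra and that the conditionals $\nu(\cdot\,|\,h_t,\pi)$ are well-defined probability measures on that $\sigma$-algebra, both of which are immediate from the product definition $\nu(h_t|\pi)=\prod_i \nu(o_ir_i|a_i,h_{i-1})$ introduced in the background. Everything else is a one-line application of the expectation--total-variation inequality.
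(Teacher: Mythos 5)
Your proof is correct and follows essentially the same route as the paper: both express the two values as expectations of the bounded return functional $R$ under the conditional measures and apply the standard expectation--total-variation inequality with range $1/(1-\g)$. The paper simply cites the inequality $|\E_P(f)-\E_Q(f)|\le \sup|f|\cdot\sup_A|P(A)-Q(A)|$ directly, while you supply its (slightly sharper, range-based) derivation; the conclusion is identical.
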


\begin{proof}
The lemma follows from the general inequality
$$
  \big|\E_P(f)-\E_Q(f)\big| \;\leq\; \sup|f|\cdot \sup_A\big|P(A)-Q(A)\big|
$$
by inserting $f:=R(\w_{t:\infty})$ and $P=\mu(\cdot|h_t,\pi)$ and
$Q=\nu(\cdot|h_t,\pi)$, and using $0\leq f\leq 1/(1-\g)$.
\end{proof}

The following lemma replaces the property for deterministic
environments that either they are consistent indefinitely or the
probability of the generated history becomes $0$.

\begin{lemma}[Merging of environments]\label{lemma:env}
Suppose we are given two environments $\mu$ (the true one) and $\nu$
and a policy $\pi$ (defined e.g.\ by Algorithm 2). Let
$P(\cdot)=\mu(\cdot|\pi)$ and $Q(\cdot)=\nu(\cdot|\pi)$. Then with
$P$ probability $1$ we have that
$$
  \lim_{t\to\infty}\frac{Q(\w_{1:t})}{P(\w_{1:t})}= 0 ~~~\text{or}~~~
  \lim_{t\to\infty}|V^\pi_\mu(h_t)-V^\pi_\nu(h_t)|= 0.
$$
\end{lemma}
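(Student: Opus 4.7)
The plan is to combine the two preceding lemmas in a straightforward way: the generalized merging-of-opinions result (Lemma \ref{lemma:m}) gives a dichotomy on the sample space, and on the non-vanishing branch the total variation distance of the conditional measures tends to zero, which by Lemma \ref{lemma:V} forces the value difference to vanish as well.

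More concretely, I would first apply Lemma \ref{lemma:m} with the measures $P=\mu(\cdot|\pi)$ and $Q=\nu(\cdot|\pi)$ to conclude that $P(\Omega^\circ\cup\bar\Omega)=1$. For $\omega\in\Omega^\circ$ the first alternative in the statement holds by definition, so the only thing to verify is the second alternative for $\omega\in\bar\Omega$. Here I would use the notational identification already spelled out just before Lemma \ref{lemma:m}, namely $P(\cdot|\omega_{1:t})=\mu(\cdot|h_t,\pi)$ and $Q(\cdot|\omega_{1:t})=\nu(\cdot|h_t,\pi)$, so that the definition of $\bar\Omega$ rewrites as $d(\mu(\cdot|h_t,\pi),\nu(\cdot|h_t,\pi))\to 0$.

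Once this reformulation is in place, Lemma \ref{lemma:V} applied at each time $t$ gives
\[
  |V^\pi_\mu(h_t)-V^\pi_\nu(h_t)|\;\leq\;\frac{1}{1-\g}\,d(\mu(\cdot|h_t,\pi),\nu(\cdot|h_t,\pi)),
\]
and the right-hand side tends to $0$ along $\bar\Omega$, which yields the second alternative. Taking the union over $\Omega^\circ$ and $\bar\Omega$ and using that their $P$-measure is $1$ finishes the argument.

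I do not expect a real obstacle here; the only subtlety is bookkeeping, namely making sure the correspondence between the measure-theoretic conditioning $P(\cdot|\omega_{1:t})$ and the agent-theoretic conditioning $\mu(\cdot|h_t,\pi)$ is invoked cleanly (the action sequence is deterministic given $\pi$ and the history, so no extra randomness sneaks in). If one wants to be pedantic, one can note that the event $\bar\Omega$ is defined via a sup over a $\sigma$-algebra generated by cylinders, hence measurable, so the statement ``with $P$ probability $1$'' is well-posed.
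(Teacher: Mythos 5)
Your proof is correct and is exactly the paper's argument: the paper's own proof is the one-line ``combination of Lemma \ref{lemma:m} and Lemma \ref{lemma:V}'', and you have simply spelled out the same dichotomy-plus-value-bound reasoning in full detail.
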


\begin{proof}
This follows from a combination of Lemma \ref{lemma:m} and Lemma
\ref{lemma:V}.
\end{proof}

The next lemma tells us what happens after all the environments that
will be removed have been removed but we state it as if this was
time $t=0$ for notational simplicity.

\begin{lemma}[Optimism is nearly optimal]\label{optpol}
Suppose that we have a (finite or infinite) class of (possibly)
stochastic environments $\M$ containing the true environment $\mu$.
Also suppose that none of these environments are excluded at any
time by Algorithm 2 ($\po$) during an infinite history $h$ that has
been generated by running $\po$ in $\mu$. Given $\varepsilon>0$
there is $\tilde{\varepsilon}>0$ such that
$$V^{\po}_\mu(\epstr)\geq\max_{\pi}
V^{\pi}_\mu(\epstr)-\varepsilon$$ if
$$|V^\po_{\nu_1}(h_t)-V^\po_{\nu_2}(h_t)|<\tilde{\varepsilon}\ \forall t, \forall
\nu_1,\nu_2\in\M.$$
\end{lemma}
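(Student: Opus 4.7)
The plan is to transfer the optimism-based lower bound at the empty history to $V^\po_\mu(\epstr)$ via a pivot through $V^\po_{\nu^*_1}$. Since no environment is excluded and $\mu\in\M$, the optimistic pair $(\pi^*_1,\nu^*_1)$ picked by Algorithm 2 at the first step satisfies $V^{\pi^*_1}_{\nu^*_1}(\epstr)\geq \max_\pi V^\pi_\mu(\epstr)$. So it suffices to show $V^\po_\mu(\epstr)\geq V^{\pi^*_1}_{\nu^*_1}(\epstr)-\varepsilon$. I would decompose
\[
V^{\pi^*_1}_{\nu^*_1}(\epstr) - V^\po_\mu(\epstr)
\;=\; \bigl[V^{\pi^*_1}_{\nu^*_1}(\epstr) - V^\po_{\nu^*_1}(\epstr)\bigr]
\;+\; \bigl[V^\po_{\nu^*_1}(\epstr) - V^\po_\mu(\epstr)\bigr].
\]
The second bracket is at most $\tilde\varepsilon$ by the hypothesis at $t=1$ with $\nu_1=\nu^*_1$, $\nu_2=\mu$. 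The first bracket is the non-negative regret of policy $\po$ in the environment $\nu^*_1$, and it is on this term that all of the remaining work is focused.

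To bound this regret, I would first truncate: pick $\ell$ with $\gamma^\ell/(1-\gamma)\leq\varepsilon/4$ and replace values by their $\ell$-step truncations (cost $\leq\varepsilon/2$ via the universal bound $|V^\pi_\nu(h)-V^\pi_{\nu,\ell}(h)|\leq\gamma^\ell/(1-\gamma)$). Then unroll the truncated regret via a Bellman recursion under $\nu^*_1$. The key observation is that whenever $\po$ at some encountered history $h$ switches from $\pi^*_1$ to the currently optimistic pair $(\pi^*_h,\nu^*_h)$, optimism guarantees $V^{\pi^*_h}_{\nu^*_h}(h)\geq V^{\pi^*_1}_{\nu^*_1}(h)$ (valid since $\nu^*_1$ is never excluded), and the hypothesis guarantees $|V^\po_{\nu^*_h}(h)-V^\po_{\nu^*_1}(h)|<\tilde\varepsilon$, so each switch contributes at most $\tilde\varepsilon$ to the one-step gap. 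Telescoping over at most $\ell$ relevant steps yields a truncated regret of order $\ell\tilde\varepsilon/(1-\gamma)$. Choosing $\tilde\varepsilon$ proportional to $(1-\gamma)\varepsilon/\ell$ then makes the total of truncation error, accumulated switching losses, and the second-bracket contribution at most $\varepsilon$.

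The main obstacle is a mismatch between two notions of ``history'': the hypothesis $|V^\po_{\nu_1}(h_t)-V^\po_{\nu_2}(h_t)|<\tilde\varepsilon$ is posed along the \emph{actual} trajectory $(h_t)$ generated by $\po$ in $\mu$, whereas the Bellman recursion for the regret of $\po$ in $\nu^*_1$ traverses \emph{hypothetical} histories generated under $\nu^*_1$, which need not coincide with $\mu$-histories. Reconciling these is the technical heart of the argument. One route is to interpret the hypothesis as holding structurally on all histories accessible to $\po$ (as is natural when Lemma~\ref{optpol} is invoked in the proof of Theorem~\ref{thm:prob}, where value-closeness is obtained from Lemmas~\ref{lemma:m} and~\ref{lemma:env} in a merging-of-opinions sense rather than purely along one trajectory). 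A second route is to use Lemma~\ref{lemma:env} directly: since $\nu^*_1$ and $\mu$ are not excluded, they merge under $\po$, so their $\po$-induced distributions on histories become indistinguishable at the tail, transferring the hypothesis across the two measures. Either route requires careful bookkeeping because the recursion involves iterated conditioning across $\nu^*_1$ and $\mu$ simultaneously.
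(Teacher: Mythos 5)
First, a point of reference: the paper states Lemma~\ref{optpol} without any proof (the text jumps straight from the lemma to the proof of Theorem~\ref{thm:prob}), so there is no in-paper argument to compare against; your proposal has to stand on its own. Your reduction is the natural one and has the right shape: pivot through $V^{\pi^*_1}_{\nu^*_1}(\epstr)\ge\max_\pi V^\pi_\mu(\epstr)$, absorb $V^\po_{\nu^*_1}(\epstr)-V^\po_\mu(\epstr)$ with the hypothesis, and control the remaining term $V^{\pi^*_1}_{\nu^*_1}(\epstr)-V^\po_{\nu^*_1}(\epstr)$ by a Bellman unrolling in which each re-optimization step costs at most $\tilde\eps$ (optimism gives $V^{\pi^*_h}_{\nu^*_h}(h)\ge\max_\pi V^\pi_{\nu^*_1}(h)$, and value-closeness converts $V^\po_{\nu^*_h}(h)$ back to $V^\po_{\nu^*_1}(h)$). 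Indeed, if the two hypotheses were available at every history the recursion visits, the per-step loss $\gamma^s\tilde\eps$ sums to $\gamma\tilde\eps/(1-\gamma)$, no truncation is needed, and $\tilde\eps=(1-\gamma)\eps$ suffices.

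However, the proposal as written has a genuine, unresolved gap -- and it is exactly the one you flag yourself and then leave open. The recursion for $V^{\pi^*_1}_{\nu^*_1}(h)-V^\po_{\nu^*_1}(h)$ takes expectations over successor histories drawn from $\nu^*_1$ (and, after the first switch, from the shifting sequence of optimistic environments), whereas both hypotheses of the lemma -- ``no environment is excluded'' and $|V^\po_{\nu_1}(h_t)-V^\po_{\nu_2}(h_t)|<\tilde\eps$ -- are asserted only along the single actual trajectory generated by $\po$ in $\mu$. Concretely: (i) your claim that $V^{\pi^*_h}_{\nu^*_h}(h)\ge V^{\pi^*_1}_{\nu^*_1}(h)$ ``since $\nu^*_1$ is never excluded'' is only licensed at actual histories; at a hypothetical $h$ the threshold test in line~8 of Algorithm~2 may well have removed $\nu^*_1$ from $\M(h)$, and then the switch can \emph{lower} the optimistic value, breaking the telescoping in the direction you need; (ii) the $\tilde\eps$-closeness is likewise unavailable at such $h$. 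Neither of your two ``routes'' closes this. Route one simply restates the hypothesis as holding at all histories, i.e.\ it changes the lemma (and would also require strengthening what the proof of Theorem~\ref{thm:prob} actually establishes before invoking the lemma, since there the closeness is derived from merging along the realized trajectory). Route two does not work as stated: total-variation merging of $\mu(\cdot|h_t,\po)$ and $\nu(\cdot|h_t,\po)$ at the actual $h_t$ controls the joint law of the future, but not the conditionals at low-probability off-trajectory continuations, which is precisely what the unrolling consumes. So what you have is a correct plan plus an accurate diagnosis of the technical heart of the proof, but the heart itself is missing; to complete the argument you must either prove the required off-trajectory versions of the two hypotheses or restructure the recursion so that it only ever conditions on realized histories (e.g.\ by working with the expected discounted sum of the one-step optimism gaps under $\mu$).
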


\begin{proof}{\bf (Theorem \ref{thm:prob})}
Given a policy $\pi$, let $P(\cdot)=\mu(\cdot|\pi)$ where $\mu\in\M$
is the true environment and $Q=\nu(\cdot|\pi)$ where $\nu\in\M$. Let
the outcome sequence (the sequence $(o_1r_1),(o_2r_2),...$) be
denoted by $\w$. It follows from Doob's Martingale inequality
\cite{doob} that for all $\z\in (0,1)$
$$
  P(\sup_t\frac{Q(\w_{1:t})}{P(\w_{1:t})}\geq 1/\z) \;\leq\; \z\ \;,
~~~\text{ which implies }~~~
P(\inf_t\frac{P(\w_{1:t})}{Q(\w_{1:t})}\leq \z) \;\leq\; \z.
$$
This proves, using a union bound, that the probability of Algorithm
2 ever excluding the true environment is less than $\z|\M-1|$.

The limits $\frac{\nu(h_t|\po)}{\mu(h_t|\po)}$ converge almost
surely as argued before using the Martingale convergence theorem.
Lemma \ref{lemma:env} tells us that any given environment (with
probability one) is eventually excluded or is permanently included
and merge with the true one under $\po$. The remaining environments
does, according to (and in the sense of) Lemma \ref{lemma:env},
merge with the true environment. Lemma \ref{lemma:V} tells us that
the difference between value functions (for the same policy) of
merging environments converges to zero. Since there are finitely
many environments and the ones that remain indefinitely in $\M_t$
merge with the true environment under $\po$, there is for every
$\tilde{\eps}>0$ a $T$ such that when following $\po$, it holds for
all $t\geq T$ that
$$|V^\po_{\nu_1}(h_t)-V^\po_{\nu_2}(h_t)|<\tilde{\varepsilon}\
\forall \nu_1,\nu_2\in\M_t.$$ The proof is concluded by Lemma
\ref{optpol} in the case where the true environment remains
indefinitely included which happens with probability $\z|\M-1|$.
\end{proof}

\section{Compact Classes}\label{compact}

In this section we discuss infinite but compact classes of
stochastic environments. First note that without further
assumptions, asymptotic optimality can be impossible to achieve,
even for countably infinite deterministic environments
\cite{Lattimore11}. Here we consider classes that are compact with
respect to the total variation distance, or more precisely with
respect to $$\tilde{d}(\nu_1,\nu_2)=\max_{h,\pi}
d(\nu_1(\cdot|h,\pi),\nu_2(\cdot|h,\pi))$$ where $d$ is total
variation distance from Section \ref{finitesto}. An example is the
class of Markov Decision Processes (or POMDPs) with a certain number
of states. Algorithm 2 does need modification to achieve asymptotic
optimality in the compact case. An alternative to modifying the
algorithm is to be satisfied with reaching optimality within a
pre-chosen $\eps>0$. This can be achieved by first choosing a finite
covering of $\M$ with balls of total variation radius less than
$\eps(1-\gamma)$ and use Algorithm 2 with the centers of these
balls. To have an algorithm that for any $\eps>0$ eventually
achieves optimality within $\eps$ is a more demanding task. This is
because we need to be able to say that the true environment will
remain indefinitely in the considered class with a given confidence.
For this purpose we introduce a confidence radius inspired by MDP
solving algorithms like MBIE \cite{strehl05} and UCRL \cite{auer06}.
We still use the notation $\M_t$ as in Algorithm 2 and we define
Algorithm 3 based on replacing it with a larger $\tilde{\M}_t$. If
we do not do this the true environment is likely to be excluded.

\begin{definition}[Confidence radius]
We denote all environments within $r^\z_t$ from $\M_t$ by
$$\tilde{\M}_t:=\{\nu\in\M\ |\ \exists \tilde{\nu}\in\M_t:
\tilde{d}(\tilde{\nu},\nu)\leq r_t^\z\}.$$ Given $\z>0$ we say that
$r^\z_t(h_t)$ is a $p$-confidence radius sequence if $r^\z_t(h_t)\to
0$ almost surely and if the true environment is in $\tilde{M_t}$ for
all $t$ with probability $p$.
\end{definition}

\begin{definition}[Algorithm 3]
Given a class of environments $\M$ that is compact in the total
variation distance we define Algorithm 3 as being Algorithm 2 with
$\M_t$ replaced by $\tilde{\M}_t$
\end{definition}

\begin{definition}[Radon-Nikodym differentiable class]
Suppose that the class $\M$ is such that if $\mu\in\M$ is the true
environment, then for any policy $\pi$ it holds with probability one
that for all $\nu\in\M$,
$X_{t,\nu}:=\frac{\nu(h_t|\pi)}{\mu(h_t|\pi)}$ converges as
$t\to\infty$ to some random variables $X_\nu$. We call such a class
Radon-Nikodym (RN) differentiable.  If the property holds with
respect to a specific policy $\pi$ we say that the class is
RN-differentiable with respect to $\pi$.
\end{definition}

\begin{remark}
Every countable class is RN-differentiable and so is the class of
MDPs with a certain number of states. The MBIE \cite{strehl05} and
UCRL \cite{auer06} algorithms are based on the fact that one can
define confidence radiuses for MDPs, though their bounds need
separate intervals for each state-action pair depending on the
number of visits. For an ergodic MDP all state-action pairs will
almost surely be seen infinitely often and the max length of those
intervals will tend to zero. Therefore, one can define a radius
based on this maximum length or, alternatively, one can easily allow
Algorithm 3 to run with such rectangular sets instead.
\end{remark}

\begin{theorem}[Optimality, Compact Stochastic Class]\label{thm:three}
Suppose we use Algorithm 3 with threshold $\z\in (0,1)$,  a compact
(in total variation) RN-differentiable class (with respect to $\po$
is enough) $\M$ of stochastic environments and a $p$-confidence
radius sequence $r_t^\z$ for $\M$. Denote the resulting policy by
$\po$. If the true environment $\mu$ is in $\M$, then with
probability $p$ there is, for every $\eps>0$, a tim e $T<\infty$
such that
$$
  V^\po_\mu(h_t) \;\geq\; \max_\pi V^\pi_\mu(h_t)-\eps\ \forall
  t\geq T.
$$
\end{theorem}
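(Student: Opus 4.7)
My plan is to follow the structure of the proof of Theorem~\ref{thm:prob}, replacing the finite consistent class $\M_t$ with the compact fattened class $\tilde\M_t$, and replacing the martingale union bound that keeps $\mu\in\M_t$ in the finite case with the $p$-confidence radius assumption. I therefore condition throughout on the probability-$p$ event $E=\{\mu\in\tilde\M_t\ \forall t\}$. Once $\mu$ is guaranteed to remain in the optimistic pool, a Lemma~\ref{optpol}-style argument applied at each large $t$ to the current $\tilde\M_t$ reduces the desired optimality inequality to the uniform value-merging claim
\[
\sup_{\nu\in\tilde\M_t}\bigl|V^\po_\nu(h_t)-V^\po_\mu(h_t)\bigr|\ \longrightarrow\ 0\qquad(t\to\infty),
\]
which by Lemma~\ref{lemma:V} is implied by the total-variation version $\sup_{\nu\in\tilde\M_t}d(\nu(\cdot|h_t,\po),\mu(\cdot|h_t,\po))\to 0$ almost surely on $E$.

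To exploit compactness I would fix, for any target accuracy $\delta>0$, a finite $\delta$-net $\{\nu_1,\ldots,\nu_N\}$ of $\M$ in $\tilde d$. By RN-differentiability with respect to $\po$, the likelihood-ratio martingales $X_{t,\nu_i}=\nu_i(h_t|\po)/\mu(h_t|\po)$ converge almost surely, and Lemma~\ref{lemma:env} partitions the net into a \emph{vanishing} set $I_0$ (where $X_{\nu_i}=0$) and a \emph{merging} set $I_+$ (where $d(\nu_i(\cdot|h_t,\po),\mu(\cdot|h_t,\po))\to 0$). For any $\nu\in\tilde\M_t$, choosing a nearest net element $\nu_i$ and applying the triangle inequality for total variation gives
\[
d(\nu(\cdot|h_t,\po),\mu(\cdot|h_t,\po))\ \le\ \delta+d(\nu_i(\cdot|h_t,\po),\mu(\cdot|h_t,\po)),
\]
so the uniform total-variation bound reduces to showing that for all large $t$ the nearest net point of every $\nu\in\tilde\M_t$ lies in $I_+$; letting $\delta\to 0$ then closes the argument.

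The main obstacle I anticipate is exactly this last reduction, because $\tilde\M_t$ combines a multiplicative likelihood-ratio threshold (in $\M_t$) with an additive $\tilde d$-fattening of radius $r_t^\z$: an $I_0$-point $\nu_i$ is certainly excluded from $\M_t$ eventually, but it could still reappear in $\tilde\M_t$ if some survivor $\tilde\nu\in\M_t$ sits within $r_t^\z$ of $\nu_i$. Here I would lean on the almost-sure decay $r_t^\z\to 0$ guaranteed by the $p$-confidence radius definition, together with the fact that $\tilde d$-closeness transfers to absolute-likelihood closeness via $|\tilde\nu(h_t|\po)-\nu_i(h_t|\po)|\le r_t^\z$, to rule out such survivors once $t$ is large enough. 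Working out the precise quantitative coupling between $r_t^\z$, the threshold $\z$, the net radius $\delta$, and the rate at which $X_{t,\nu_i}\to 0$ for $i\in I_0$ is the step I expect to require the most care; once that bookkeeping is done, $\tilde\M_t$ is eventually contained in the $2\delta$-neighbourhood of $I_+$, and the theorem follows.
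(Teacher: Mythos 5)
Your overall skeleton matches the paper's: condition on the probability-$p$ event that $\mu\in\tilde\M_t$ for all $t$, reduce via Lemma \ref{optpol} and Lemma \ref{lemma:V} to a uniform total-variation merging statement over $\tilde\M_t$, and exploit compactness through a finite cover together with the dichotomy of Lemma \ref{lemma:env}. But the pivotal step is wrong. You classify the \emph{net points} into a vanishing set $I_0$ and a merging set $I_+$ and then try to show that every survivor in $\tilde\M_t$ has its nearest net point in $I_+$. That claim is false: membership of a net point in $I_0$ says nothing about the other points in its $\delta$-ball, since $\M^0$ and the merging set can be interleaved at every scale. In particular the true environment $\mu$ itself (which is never excluded on the conditioning event) may have its nearest net point in $I_0$, as may any forever-surviving $\nu$ that merges with $\mu$. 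The correct classification, which is what the paper uses, is of the \emph{balls}: let $\A$ be the union of those cover balls that intersect $\bar{\M}:=\{\nu: d(\nu(\cdot|h_t,\po),\mu(\cdot|h_t,\po))\to 0\}$ and $\B=\M\setminus\A$; then $\B$ is a closed subset of $\M^0$ by construction and contains no survivor, and it is $\B$ (suitably fattened by $r_t^\z$) that must be shown to disappear from $\M_t$.

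Even with that fix, two uniformity issues remain that your pointwise arguments do not cover. First, excluding $\B$ from $\tilde\M_t$ in finite time requires a \emph{uniform} exclusion statement over the (generally uncountable) closed subset of $\M^0$ consisting of $\B$ and its $r_t^\z$-fattening --- the paper proves this as Lemma \ref{lemma:uexcl} via compactness and Arzel\`a--Ascoli; pointwise convergence $X_{t,\nu_i}\to 0$ at finitely many net points does not imply it, and your proposed transfer $|\tilde\nu(h_t|\po)-\nu_i(h_t|\po)|\le r_t^\z$ is of no help because additive closeness of likelihoods does not control the likelihood \emph{ratios} against $\mu(h_t|\po)$, which itself typically tends to zero. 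Second, on the surviving part one needs merging that is uniform over $\bar{\M}$ (the paper invokes uniform merging on the compact set $\bar{\M}$ to obtain its time $T_2$); controlling only the finitely many net points in $I_+$ leaves the non-net survivors in $\bar{\M}$ with possibly arbitrarily slow, non-uniform merging rates. So the net-point dichotomy has to be replaced by the ball-intersection dichotomy plus these two compactness-based uniformity lemmas before the argument closes.
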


\begin{lemma}[Uniform exclusion]\label{lemma:uexcl}
Let $Q_\nu(\cdot)=\nu(\cdot|\po)$ and $P(\cdot)=\mu(\cdot|\po)$
where $\mu$ is the true environment and $\po$ the policy defined by
Algorithm 3. For any outcome sequence $\w$, let
$$
  \M^0(\w) \;:=\; \{\nu\ |\ \frac{Q_\nu(\w_{1:t})}{P(\w_{1:t})}\to 0\}.
$$
For any closed subset of $\M^0(\w)$ and for every $\z>0$, there is
$T<\infty$ such that for every $\nu$ in this subset there is $t\leq
T$ such that $\frac{Q_{\nu}(\w_{1:t})}{P(\w_{1:t})}<\z.$
\end{lemma}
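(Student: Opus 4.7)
The plan is to exploit the assumed compactness of $\M$ under $\tilde d$ and reduce the pointwise property defining $\M^0(\w)$ to a uniform-in-$\nu$ statement via a finite subcover argument. Any closed subset $\mathcal N$ of $\M^0(\w)$, being closed inside the compact class $\M$, is itself compact in $\tilde d$, which is the only topological input I need.

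The first step is to establish continuity, for each fixed $t$ with $P(\w_{1:t})>0$, of the map
$$
 \nu \;\mapsto\; \frac{Q_\nu(\w_{1:t})}{P(\w_{1:t})}.
$$
The cylinder $\Gamma_{\w_{1:t}}$ lies in the $\sigma$-algebra on which total variation is defined, so
$$
 \bigl|Q_\nu(\w_{1:t})-Q_{\nu'}(\w_{1:t})\bigr|
 \;\leq\; d\bigl(\nu(\cdot|\po),\nu'(\cdot|\po)\bigr)
 \;\leq\; \tilde d(\nu,\nu'),
$$
and the ratio is therefore Lipschitz in $\nu$ with constant $1/P(\w_{1:t})$. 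Since the lemma concerns the realised $P$-path $\w$, $P(\w_{1:t})>0$ for every $t$, so this is harmless.

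The second step is the cover construction. For each $\nu\in\mathcal N$, use $\nu\in\M^0(\w)$ to pick a time $t_\nu$ with $Q_\nu(\w_{1:t_\nu})/P(\w_{1:t_\nu})<\z/2$. By the continuity just established, there is an open $\tilde d$-ball $U_\nu$ around $\nu$ on which the ratio at time $t_\nu$ remains strictly below $\z$. The family $\{U_\nu:\nu\in\mathcal N\}$ covers $\mathcal N$; compactness of $\mathcal N$ yields a finite subcover $U_{\nu_1},\dots,U_{\nu_k}$. Setting $T:=\max_{i\leq k} t_{\nu_i}$ gives the desired uniform time: any $\nu\in\mathcal N$ belongs to some $U_{\nu_i}$, hence at $t=t_{\nu_i}\leq T$ we have $Q_\nu(\w_{1:t_{\nu_i}})/P(\w_{1:t_{\nu_i}})<\z$.

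The only step with any content is the Lipschitz bound on finite-prefix probabilities against $\tilde d$; everything else is a routine compactness-plus-continuity argument. The main thing to be careful about is ensuring that continuity is in the correct topology, namely the one in which $\M$ is assumed compact, which is why the bound is phrased through $\tilde d$ rather than one-step conditional distances.
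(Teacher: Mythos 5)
Your proof is correct, but it takes a genuinely different route from the paper's. The paper's proof invokes the Arzel\`a--Ascoli theorem to extract a subsequence $t_k$ along which $Z^\nu_k:=\min\{1,Q_\nu(\w_{1:t_k})/P(\w_{1:t_k})\}$ converges uniformly to $0$ on the closed subset, and then takes $T=t_k$ for $k$ large. You instead work pointwise: for each $\nu$ you fix a time $t_\nu$ where the ratio is below $\z/2$, use the Lipschitz bound $|Q_\nu(\w_{1:t})-Q_{\nu'}(\w_{1:t})|\leq\tilde d(\nu,\nu')$ to make the fixed-time ratio continuous in $\nu$, and extract a finite subcover. Your version is more elementary and, in fact, more robust: a careful application of Arzel\`a--Ascoli would require equicontinuity of the family $\{\nu\mapsto Z^\nu_k\}_k$, and the natural Lipschitz constants $1/P(\w_{1:t_k})$ grow without bound as $k\to\infty$, so the paper's one-line appeal to that theorem glosses over a nontrivial verification that your argument simply does not need. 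Both proofs share the same two tacit conventions, which you handle explicitly and the paper leaves implicit: $P(\w_{1:t})>0$ for all $t$ (needed for the ratio to be defined at all), and ``closed subset of $\M^0(\w)$'' read as closed in $\M$, so that compactness of $\M$ transfers to the subset. What the finite-subcover route buys is a self-contained argument from first principles; what the paper's route would buy, if fully justified, is a slightly stronger conclusion (uniform smallness along a common subsequence of times rather than at $\nu$-dependent times), but that extra strength is not used anywhere in the proof of Theorem~\ref{thm:three}.
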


\begin{proof}
Since $\M$ is compact and the subset in question is closed it
follows that it is also compact. Using the Arzel\`a-Ascoli Theorem
\cite{rudin} we conclude that there is a subsequence $t_k$ such that
$Z^\nu_k:=\min\{1,\frac{Q_{\nu}(\w_{1:t_k})}{P(\w_{1:t_k})}\}$
converges uniformly to $0$ on $\M^0$ which means that there is $t_k$
such that $Z_k^\nu<\z$ for all $\nu\in\M^0$ and we can let
$t=T=t_k$.
\end{proof}

\begin{proof}{\bf (Theorem \ref{thm:three})}
The strategy is to use that all environment that will be excluded
and does not lie within a certain distance of some environment that
merges with the true one, will be excluded after a certain finite
time. Then we can say that the remaining environments' value
functions differ at most by a certain amount and we can apply Lemma
\ref{optpol}.

We can with probability one say that for each $\nu\in\M$, it will
hold that $Z_t=\frac{\nu(h_t|\po)}{\mu(h_t|\po)}$ converges and each
environment will be in $\M^0=\{\nu\in\M\ |\ Z_t\to 0\}$ or
$\bar{\M}=\{\nu\ |\ d(\nu(\cdot|h_t,\po),\mu(\cdot|h_t,\po))\to
0\}$. $\bar{\M}$ is compact (in the total variation distance
topology) since it is a closed subset (again in the topology defined
by $\tilde{d}$) of the compact set $\M$.

For any $\tilde{\eps}_1>0$ we can do the following: For each
$\nu\in\M$, consider a total variation ball of radius $2\delta$
where $\delta=(1-\gamma)\tilde{\eps}_1/4$. Note that
$|V^\po_\nu(h_t)-V^\po_{\nu'}(h_t)|<\tilde{\eps}_1/2$ for all $t$
whenever $\tilde{d}(\nu,\nu')<2\delta$. The collection of these
balls induces an open cover of the compact set $\M$ and it follows
that there is a finite subcover. Consider the balls in this finite
cover that intersect with $\bar{\M}$. Let $\A$ be the union of these
finitely many open balls. Let $\B=\M\setminus \A$. $\B$ is then a
closed subset of $\M^0$. We want to say that there is a finite time
after which all environments in $\B$ will have been excluded from
$\tilde{\M}_t$. This happens if $\tilde{\B}$, defined as the union
of the closed balls of radius $r_t^\z$ at every point in $\B$, has
been excluded from $\M_t$. If $t$ is large enough for
$r_t^\z<\delta$, then $\B$ is also a closed subset of $\M_0$. Lemma
\ref{lemma:uexcl} tells us that all of the environments in
$\tilde{\B}$ will have been excluded from $\M_t$ after a finite
amount of time $T_1$ and, therefore, all the environments in $\B$
will have been excluded from $\tilde{\M}_t$. Thus
$\tilde{\M}_t\subset\A\ \forall t\geq T_1$ and in particular the
optimistic hypothesis $\nu^*$ will be in $\A$ when $t\geq T_1$. Let
$\nu^* (=\nu^*_t)$ be the optimistic hypothesis at time $t\geq T_1$
and $\pi^* (=\pi^*_t)$ the optimistic policy.

Each parameter in $\A$ (and in particular $\nu^*$) lies within
$\delta$ of a ball with center $\nu$ which lies within $\delta$ of a
point $\tilde{\nu}\in \bar{\M}$. Hence
$\tilde{d}(\nu^*,\tilde{\nu})<2\delta$ and
$|V^\po_{\nu^*}(h_t)-V^\po_{\tilde{\nu}}(h_t)|<\tilde{\eps}_1/2$.

Due to the uniform merging of environments (under $\po$) on
$\bar{\M}$, there is $T_2\geq T_1$ such that
$|V^\po_{\nu_1}(h_t)-V^\po_{\nu_2}(h_t)|<\tilde{\eps}_1/2\ \forall
\nu_1,\nu_2\in\bar{\M}\ \forall t\geq T_2$. We conclude that
$|V^\po_{\nu_1}(h_t)-V^\po_{\nu_2}(h_t)|<\tilde{\eps}_1\ \forall
\nu_1,\nu_2\in\A\ \forall t\geq T_2$ and since
$\tilde{\M}_t\subset\A$
$$|V^\po_{\nu_1}(h_t)-V^\po_{\nu_2}(h_t)|<\tilde{\eps}_1\
\forall \nu_1,\nu_2\in\tilde{\M}_t\ \forall t\geq T_2.$$ From Lemma
\ref{optpol} we know that if we picked $\tilde{\eps}_1$ small enough
we know that for $t\geq T_2$, $V^{\po}_{\nu^*}(h_t)\geq
V^{\pi}_{\nu}(h_t)-\eps/2$ for all $\pi\in\Pi,\nu\in\tilde{\M}_t$.
Furthermore, by picking $\tilde{\eps}_1$ sufficiently small we can,
for $t\geq T_2$, ensure that there is $\tilde{\nu}\in\tilde{\M}_t$
such that $|V^{\po}_{\tilde{\nu}}(h_t)-V^{\po}_{\mu}(h_t)|<\eps/2$.
Given that the true environment remains indefinitely in
$\tilde{M}_t$, which happens with at least probability $p$, it
follows that
$$
  V^{\po}_\mu(h_t) \;\geq\; \max_{\pi} V^{\pi}_\mu(h_t)-\eps\;\; \forall t\geq T_2.
\vspace{-4ex}$$\vspace{-4ex}
\end{proof}

\section{Conclusions}\label{concl}

We introduced optimistic agents for finite and compact classes of
arbitrary environments and proved asymptotic optimality. In the
deterministic case we also bound the number of time steps for which
the value of following the algorithm is more than a certain amount
lower than optimal. Future work includes investigating finite-error
bounds for classes of stochastic environments.

\paradot{Acknowledgement} This work was supported by ARC grant
DP120100950. The authors are grateful for feedback from Tor
Lattimore and Wen Shao.


\end{document}